\definecolor{blue2b}{rgb}{0,0.1,0.3}
\definecolor{blue2}{rgb}{0,0.2,0.7}
\definecolor{red2}{rgb}{0.6,0.1,0.0}
\definecolor{green2}{rgb}{0.1,0.4,0.0}
\definecolor{yel2}{rgb}{0.3,0.2,0.0}
\definecolor{purple2}{rgb}{0.5,0.0,0.5}
\definecolor{blue3}{rgb}{0.65,0.85,1.0}
\definecolor{red3}{rgb}{1.0,0.7,0.5}
\definecolor{green3}{rgb}{0.8,1.0,0.7}
\definecolor{yel3}{rgb}{1.0,1.0,0.7}
\definecolor{grey3}{rgb}{0.95,0.95,0.95}
\definecolor{gray3}{rgb}{0.95,0.95,0.95}
\renewcommand{\emph}{\textit}%
\newcommand{\vect}[1]{{\mathbf{#1}}}
\newcommand{\func}[1]{{\mathsf{#1}}}
\newtheorem{definition}{Definition}
\newtheorem{theorem}{Theorem}
\newtheorem{example}{Example}
\newlength\mylen
\journal{Information Sciences}
\begin{document}

\begin{frontmatter}

\title{Genie: A new, fast, and outlier-resistant hierarchical~clustering~algorithm}

\author[affil1,affil2]{Marek Gagolewski\corref{cor1}}
\author[affil2,affil3]{Maciej Bartoszuk}
\author[affil1,affil3]{Anna Cena}

\address[affil1]{Systems Research Institute, Polish Academy of Sciences\\%
ul.~Newelska 6, 01-447 Warsaw, Poland}
\address[affil2]{Faculty of Mathematics and Information Science,
Warsaw University of Technology\\%
ul.~Koszykowa 75, 00-662 Warsaw, Poland}
\address[affil3]{International PhD Studies Program,\\%
Institute of Computer Science, Polish Academy of Sciences}

\cortext[cor1]{Corresponding author; Email: marek.gagolewski@pw.edu.pl}

\begin{abstract}{\small
The time needed to apply a hierarchical clustering algorithm
is most often dominated by the number of computations of a pairwise
dissimilarity measure. Such a constraint, for larger data sets,
puts  at a disadvantage the use of all the classical linkage
criteria but the single linkage one. However, it is known that the single
linkage clustering algorithm is very sensitive to outliers, produces highly
skewed dendrograms, and therefore usually does not reflect the true
underlying data structure -- unless the clusters are well-separated.
To overcome its limitations, we propose a new hierarchical clustering linkage
criterion called Genie. Namely, our algorithm links two clusters in such a way that a chosen
economic inequity measure (e.g., the Gini- or Bonferroni-index) of the cluster
sizes does not increase drastically above a given threshold. The presented
benchmarks indicate a high practical usefulness of the introduced method:
it most often outperforms the Ward or average linkage in terms of the clustering quality
while retaining the single linkage speed.
The Genie algorithm is easily parallelizable and thus may be run
on multiple threads to speed up its execution further on.
Its memory overhead is small: there is no need to precompute the complete
distance matrix to perform the computations in order to obtain a desired
clustering. It can be applied on arbitrary spaces equipped with a dissimilarity
measure, e.g., on real vectors, DNA or protein sequences, images, rankings,  informetric data, etc.
A reference implementation of the algorithm has been included
in the open source \texttt{genie} package for~R.
[\textit{See \url{https://genieclust.gagolewski.com} for a new implementation --
available for both R and Python.}]
}

\noindent
{\footnotesize\color{blue}Please cite this paper as:
Gagolewski M., Bartoszuk M., Cena A.,
Genie: A new, fast, and outlier-resistant hierarchical~clustering~algorithm,
\textit{Information Sciences} \textbf{363}, 2016, pp.~8--23, doi:10.1016/j.ins.2016.05.003.}

\end{abstract}

\begin{keyword}
hierarchical clustering \sep single linkage \sep inequity measures \sep Gini-index
\end{keyword}

\end{frontmatter}

\ifthenelse{\boolean{debug}}{\listoftodos[Notes]}{}

\ifthenelse{\boolean{debug}}{
\tableofcontents
}{}

\section{Introduction}

Cluster analysis, compare~\cite{TibEtAll:elementsstat}, is one of the most
commonly applied unsupervised machine learning techniques.
Its aim is to automatically discover an underlying structure of a given data set
$\mathcal{X}=\{\vect{x}^{(1)},\vect{x}^{(2)},\dots, \vect{x}^{(n)}\}$
in a form of a partition of its elements: disjoint and nonempty subsets are
determined in such a way that observations within each group are ``similar''
and entities in distinct clusters ``differ'' as much as possible from each other.
This contribution focuses on classical hierarchical clustering algorithms
\cite{CaiETAL2014:hierclust,Dasgupta2002:hier} which
determine a sequence of nested partitions, i.e., a whole hierarchy of data set subdivisions
that may be cut at an arbitrary level
and may be computed based on a pairwise dissimilarity measure
$\mathfrak{d}: \mathcal{X}\times \mathcal{X}\to[0,\infty]$ that fulfills \textit{very mild assumptions}:
(a) $\mathfrak{d}$ is symmetric, i.e., $\mathfrak{d}(\vect{x}, \vect{y}) = \mathfrak{d}(\vect{y}, \vect{x})$
and (b) $(\vect{x}=\vect{y}) \implies \mathfrak{d}(\vect{x}, \vect{y}) = 0$
for any $\vect{x}, \vect{y}\in \mathcal{X}$.
This group of clustering methods is often opposed to -- among others -- partitional schemes
which require the number of output clusters to be set up in advance:
these include the $k$-means, $k$-medians, $k$-modes, or $k$-medoids algorithms
\cite{XuWunsch:clustering,MacQueen1967:kmeans,JiangETAL2016:kmodes,ZahraETAL2015:kmeans}
and fuzzy clustering schemes \cite{Bezdek1981:fcm,PedryczWaletzky1997:fuzclustsup,%
PedryczBargiela2002:granclust,Pedrycz1996:conditionalfcm},
or the BIRCH (balanced iterative reducing and clustering using hierarchies)
method \cite{ZhangETAL1996:BIRCH} that works on real-valued vectors only.

In the large and big data era, one often is faced with the need to cluster
data sets of considerable sizes, compare, e.g., \cite{HalimETAL2015:largeclust}.
If the $(\mathcal{X},\mathfrak{d})$ space is ``complex'', we observe that the
run-times of hierarchical clustering algorithms
are dominated by the cost of pairwise distance (dissimilarity measure) computations. This is the case of, e.g.,
DNA sequences or ranking clustering, where elements in $\mathcal{X}$ are encoded as integer vectors,
often of considerable lengths. Here, one often relies on such computationally demanding metrics
as the Levenshtein or the Kendall one.
Similar issues appear in numerous other  application domains,
like pattern recognition, knowledge discovery, image processing, bibliometrics, complex networks, text mining,
or error correction, compare \cite{GomezETAL2015:hiernet,CaiETAL2014:hierclust,%
DinuIonescu2012:clusteringcloseststring,FerreiraZhao2016:clusttime,DimitrovskiETAL2016:bagclust}.

In order to achieve greater speed-up, most hierarchical clustering algorithms are applied on
a precomputed distance matrix, $(d_{i,j})_{i < j}$,
$d_{i,j}=\mathfrak{d}(\vect{x}^{(i)}, \vect{x}^{(j)})$,
so as to avoid determining the dissimilarity measure for each unique (unordered) pair
more than once. This, however, drastically limits the size of an input data set
that may be processed. Assuming that $d_{i,j}$ is represented with the
64-bit floating-point (IEEE double precision) type, already the case of $n=100{,}000$ objects is way beyond
the limits of personal computers popular nowadays: the sole distance matrix
would occupy ca.~40GB of available RAM. Thus, for ``complex'' data domains,
we must require that the number of calls to $\mathfrak{d}$ is kept as small as
possible. This practically disqualifies all popular hierarchical clustering
approaches other than the \emph{single linkage} criterion,
for which there is a fast $O(n^2)$-time and $O(n)$-space algorithm, see
\cite{Mullner2011:fastclusteralg,Mullner2013:fastcluster,XuWunsch:clustering},
that requires each $d_{i,j}$, $i<j$, to be computed exactly once,
i.e., there are precisely $(n^2-n)/2$ total calls to $\mathfrak{d}$.

Nevertheless, the single linkage criterion is not eagerly used by practitioners.
This is because it is highly sensitive to observations laying far away
from clusters' boundaries (e.g., outliers). Because of that, it
tends to produce highly skewed dendrograms: at its higher levels one often finds a
single large cluster and a number of singletons.

In order to overcome the limitations of the single linkage scheme,
in this paper we propose a new linkage criterion called \emph{Genie}. It not only
produces high-quality outputs (as compared, e.g., to the Ward and average linkage
criteria) but  is also relatively fast to compute.
The contribution is set out as follows. In the next section,
we review some basic properties of hierarchical clustering algorithms
and introduce the notion of an inequity index, which can be used to compare
some aspects of the quality of clusterings obtained by means of different
algorithms. The new linkage criterion, together with its evaluation
on diverse benchmark sets, is introduced in Section~\ref{Sec:algorithm}.
In Section~\ref{Sec:implementation} we propose an algorithm to compute
the introduced clustering scheme and test its time performance.
Please note that a reference implementation of the Genie method has been included
in the \texttt{genie} package\footnote{See \url{https://genieclust.gagolewski.com} for a new implementation --
available for both R and Python.} for R \cite{Rproject:home}.
Finally, Section~\ref{Sec:conclusions} concludes the paper and provides
future research directions.

\section{A discussion on classical linkage criteria}\label{Sec:single}

While a hierarchical clustering algorithm is being computed
on a given data set $\mathcal{X}=\{\vect{x}^{(1)},\vect{x}^{(2)},\dots, \vect{x}^{(n)}\}$,
there are $n-j$ clusters at the $j$-th step of the procedure, $j=0,\dots,n-1$.
It is always true that $\mathcal{C}^{(j)}=\{C_1^{(j)},\dots,C_{n-j}^{(j)}\}$
with $C_u^{(j)}\cap C_v^{(j)} = \emptyset$ for $u\neq v$, $C_u^{(j)}\neq\emptyset$,
and $\bigcup_{u=1}^{n-j} C_u^{(j)} = \mathcal{X}$. That is, $\mathcal{C}^{(j)}$ is a partition of $\mathcal{X}$.

Initially, we have that $\mathcal{C}^{(0)}=\{\{\vect{x}^{(1)}\},\dots,\{\vect{x}^{(n)}\}\}$,
i.e., $C_i^{(0)}=\{\vect{x}^{(i)}\}$, $i=1,\dots,n$. In other words, each observation
is the sole member of its own cluster.
When proceeding from step $j-1$ to $j$, the clustering procedure decides
which of the two clusters $C_u^{(j-1)}$ and $C_v^{(j-1)}$, $u<v$,
are to be merged so that we get $C_i^{(j)}=C_{i}^{(j-1)}$ for $u\neq i<v$,
$C_u^{(j)}=C_u^{(j-1)}\cup C_v^{(j-1)}$, and
$C_i^{(j)}=C_{i+1}^{(j-1)}$ for $i>v$.
In the single (minimum) linkage scheme, $u$ and $v$ are such that:
\[
   \mathrm{arg}\min_{(u,v), u < v} \left(
\min_{\vect{a}\in C_u^{(j-1)}, \vect{b}\in C_v^{(j-1)}} \mathfrak{d}(\vect{a},\vect{b})
\right).
\]
On the other hand, the complete (maximum) linkage is based on:
\[
   \mathrm{arg}\min_{(u,v), u < v} \left(
\max_{\vect{a}\in C_u^{(j-1)}, \vect{b}\in C_v^{(j-1)}} \mathfrak{d}(\vect{a},\vect{b})
\right),
\]
the average linkage on:
\[
   \mathrm{arg}\min_{(u,v), u < v} \left(
   \frac{1}{|C_u^{(j-1)}| |C_v^{(j-1)}|}
\sum_{\vect{a}\in C_u^{(j-1)}, \vect{b}\in C_v^{(j-1)}} \mathfrak{d}(\vect{a},\vect{b})
\right),
\]
and Ward's (minimum variance) method, compare \cite{Mullner2011:fastclusteralg} and
also \cite{MurtaghLegendre2014:ward}, on:
\begin{eqnarray*}
   &&\mathrm{arg}\min_{(u,v), u < v}
   \frac{1}{{|C_u^{(j-1)}| + |C_v^{(j-1)}|}} \times \\
   &\times&
   \Bigg(
    \sum_{\vect{a}\in C_u^{(j-1)}, \vect{b}\in C_v^{(j-1)}} 2\mathfrak{d}^2(\vect{a},\vect{b})
   -  \frac{|C_v^{(j-1)}|}{|C_u^{(j-1)}|} \sum_{\vect{a},\vect{a}'\in C_u^{(j-1)}}{\mathfrak{d}^2(\vect{a},\vect{a}')}
   - \frac{|C_u^{(j-1)}|}{|C_v^{(j-1)}|}\sum_{\vect{b},\vect{b}'\in C_v^{(j-1)}} {\mathfrak{d}^2(\vect{b},\vect{b}')}
\Bigg),
\end{eqnarray*}
where $\mathfrak{d}$ is a chosen dissimilarity measure.

\subsection{Advantages of single-linkage clustering}

The main advantage behind the single linkage clustering lies in the
fact that its most computationally demanding part deals
with solving the minimum spanning tree (MST, see \cite{SL:MST}) problem,
compare, e.g., the classical Prim's \cite{Prim1957:MST}
or Kruskal's \cite{KruskalProof} algorithms as well as a comprehensive
historical overview by Graham and Hell \cite{GrahamHell1985:historymst}.
In particular, there is an algorithm \cite{Olson1995:parallelhierclust}
which can be run in parallel and which requires exactly $(n^2-n)/2$ distance
computations. Moreover, under certain assumptions on $\mathfrak{d}$
(e.g., the triangle inequality) and the space dimensionality, the Kruskal
algorithm may be modified so as to make use of some nearest-neighbor (NN)
search data structure which enables to speed up its computations further on
(the algorithm can also be run in parallel).
Having obtained the MST, a single linkage clustering
may then be computed very easily, compare \cite{Rohlf1973:mst}.

For the other three mentioned linkage schemes there
is, e.g., a quite general nearest-neighbor chains algorithm \cite{Murtagh:survey},
as well as some other methods which require that, e.g., $\mathcal{X}$ is
a subset of the Euclidean space $\mathbb{R}^d$, see \cite{Mullner2011:fastclusteralg,Mullner2013:fastcluster} for a survey.
Unfortunately, we observe that all these algorithms tend to quite frequently
refer to already computed dissimilarities; it may be shown that they use up
to $3n^2$ distance computations.
Practically, the only way to increase the performance of these algorithms  \cite{Olson1995:parallelhierclust}
is to pre-compute the whole distance matrix (more precisely, the elements either above or below its diagonal).
However, we already noted that such an approach is unusable for $n$ already of
moderate order of magnitude (``large data'').

\subsection{Drawbacks of single-linkage clustering}

Nevertheless, it may be observed that unless the underlying clusters are
very well-separated, the single linkage approach
tends to construct clusters of unbalanced sizes, often resulting -- at some
fixed dendrogram cut level -- in a single large cluster and a number of
singletons or ones with a very low cardinality.

\begin{figure}[htb!]
\centering
\includegraphics[width=12cm]{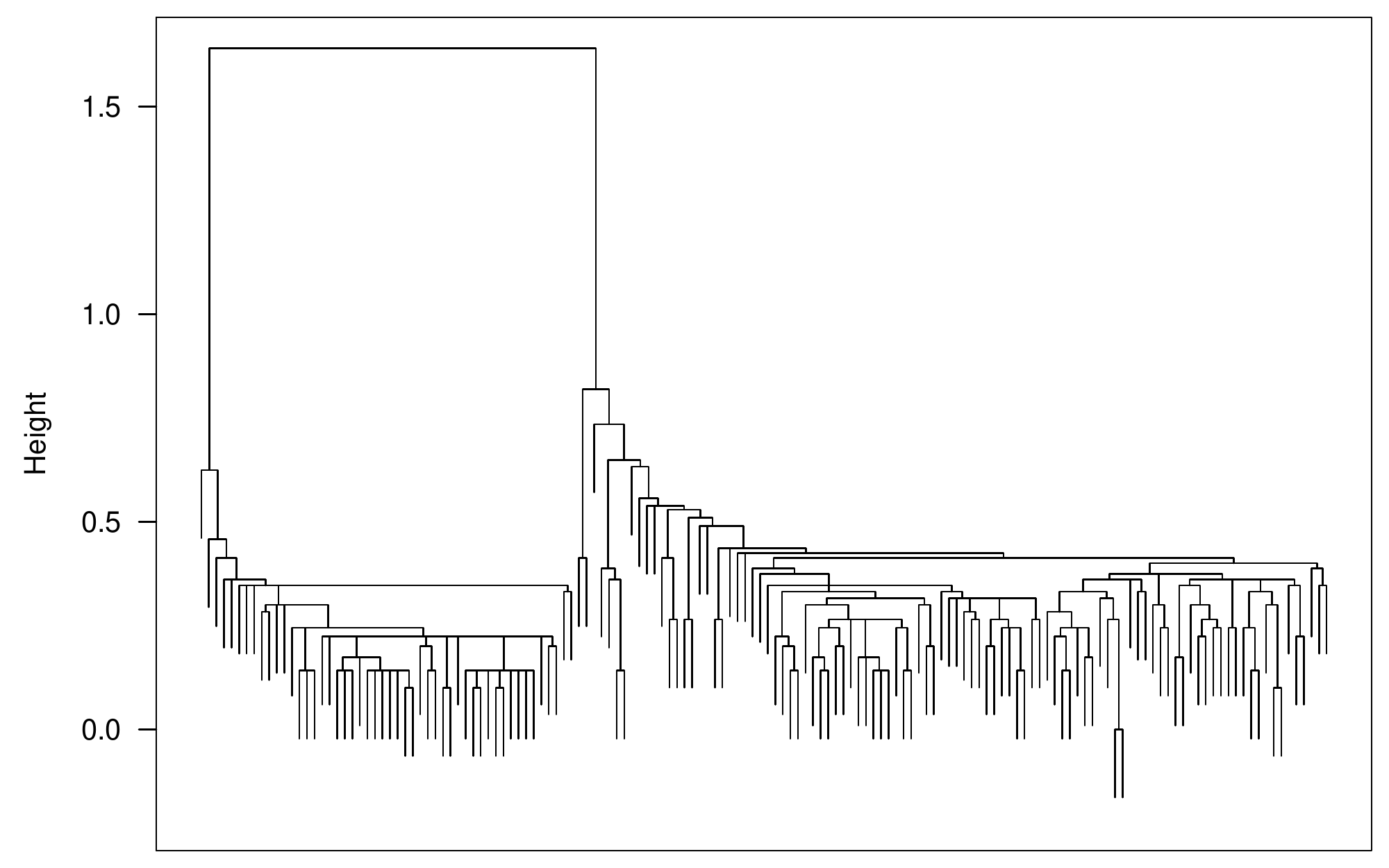}
  \caption{Dendrogram for the single linkage clustering of the \textit{Iris} data set.}
  \label{fig:single_iris_dendrogram}
\end{figure}

For instance, Figure~\ref{fig:single_iris_dendrogram} depicts a dendrogram
resulting in applying the single linkage clustering on the famous Fisher's \textit{Iris}
data set \cite{Fisher1936:iris} (available in the R \cite{Rproject:home}
\texttt{datasets} package, object name \texttt{iris}) with respect
to the Euclidean distance.
At the highest level, there are two clusters (50 observations corresponding
to \textit{iris setosa} and 100 both to \textit{virginica} and \textit{versicolor})
-- these two point groups are well-separated on a 4-dimensional plane.
Here, high skewness may be observed in the two subtrees,
e.g., cutting the left subtree at the height of 0.4 gives us a partition
consisting of three singletons and one large cluster of size 47.
These three observations lay slightly further away from the rest of the points.
When the $h=0.4$ cut of the whole tree is considered,
there are sixteen singletons, three clusters of size 2, one cluster of size 4,
and three large clusters of sizes 38, 39, and 47 (23 clusters in total).

\bigskip
In order to quantitatively capture the mentioned dendrogram \textit{skewness},
we may refer to the definition of an inequity (economic inequality, poverty) index, compare
\cite{GarciaLaprestaETAL2015:fuzzypoverty,BortotMarquesPereira2015:povertyemean,AristondoETAL2013:inequality}
and, e.g., \cite{KobusMilos2012:inequalitydecomp,Kobus2012:inequalitydecomp} for a different setting.

\begin{definition}
For a fixed $n\in\mathbb{N}$,
let $\mathcal{G}$ denote the set of all non-increasingly ordered $n$-tuples
with elements in the set of non-negative integers,
i.e., $\mathcal{G}=\{ (x_1,\dots,x_n)\in\mathbb{N}_0^n: x_1\ge\dots\ge x_n \}$.
Then $\func{F}:\mathcal{G}\to[0,1]$ is an inequity index,
whenever:
\begin{itemize}
   \item[(a)] it is Schur-convex, i.e.,
   for any $\vect{x},\vect{y}\in\mathcal{G}$ such that $\sum_{i=1}^n x_i=\sum_{i=1}^n y_i$,
   if it holds for all $i=1,\dots,n$ that $\sum_{j=1}^i x_j \le \sum_{j=1}^i y_j$, then
   $\func{F}(\vect{x})\le\func{F}(\vect{y})$,
   \item[(b)] $\inf_{\vect{x}\in\mathcal{G}} \func{F}(\vect{x})=0$,
   \item[(c)] $\sup_{\vect{x}\in\mathcal{G}} \func{F}(\vect{x})=1$.
\end{itemize}
\end{definition}

Notice that, under the current assumptions, if we restrict ourselves to the set of sequences with
elements summing up to $n$, the upper bound (meaning complete inequality) of each
inequity index is obtained for $(n,0,0,\dots,0)$
and the lower bound (no inequality) for the $(1,1,1,\dots,1)$ vector.

Every   inequity index like $\func{F}$ fulfills a crucial property
called the Pigou-Dalton principle
(also known as \emph{progressive transfers}).
Namely, for any $\vect{x}\in\mathcal{G}$, $i<j$, and $h>0$
such that $x_{i}-h \ge x_{i+1}$ and $x_{j-1}\ge x_j+h$, it holds that:
\[
\func{F}(x_1,\ldots,x_i,\ldots,x_j,\ldots,x_n) \ge \func{F}(x_{1},\ldots,x_{i}-h,\ldots,x_{j}+h,\ldots,x_{n}).
\]
In other words, any income transfer from a richer to a poorer
entity never increases the level of inequity.
Notably,  such measures of inequality of wealth distribution are not only of interest in economics:
it turns out, see \cite{BeliakovJames2015:unifyconsensus,BeliakovJamesNimmo2014:ecologicalconsensus},
that they can be related to ecological indices of evenness \cite{Pielou1969:introecology},
which aim to capture how evenly species' populations are distributed over a geographical
region, compare \cite{Camargo1993:dominance,Heip1974:evenness,Pielou1975:ecodiversity}
or measures of data spread \cite{Gagolewski2015:spread}.

Among notable examples of inequity measures we may find
the normalized Gini-index \cite{Gini1912:index}:
\begin{equation}\label{Eq:Gini}
\func{G}(\mathbf x) = \frac{\sum_{i=1}^{n-1} \sum_{j=i+1}^n |x_{i} - x_{j}|}{(n-1) \sum_{i=1}^n x_i}
\end{equation}
or the normalized Bonferroni-index \cite{Bonferroni1930:index}:
\begin{equation}
\func{B}(\mathbf x) = \frac{n}{n-1} \left(1 - \frac{\sum_{i=1}^n \frac{1}{n-i+1} \sum_{j=i}^n x_{j}}{ \sum_{i=1}^n x_i}\right).
\end{equation}

\bigskip
Referring back to the above motivational example,
we may say that there is often a high inequality between cluster sizes
in the case of the single linkage method.
Denoting by $c_i = |C_{i}^{(j)}|$ the size of the $i$-th cluster at the $j$-th iteration
of the algorithm, $\func{F}(c_{(n-j)},\dots,c_{(1)})$ tends to be very high
(here $c_{(i)}$ denotes the $i$-th smallest value in the given sequence,
obviously we always have $\sum_{i=1}^n c_i = n$.).

Figure~\ref{fig:gini_iris} depicts the Gini-indices of the cluster
size distributions as a function of the number of clusters
in the case of the \textit{Iris} data set and the Euclidean distance. The outputs of four
clustering methods are included: single, average, complete, and Ward linkage.
The highest inequality is of course observed in the case of the single linkage algorithm.
For instance, if the dendrogram is cut at height of $0.4$ (23 clusters in total,
their sizes are provided above), the Gini-index is as high as $\simeq 0.76$
(the maximum, $0.85$, is obtained for $10$ clusters).
In this example, the Ward  method keeps the Gini-index relatively low.
Similar behavior of hierarchical clustering algorithms may
be observed for other data sets.

\begin{figure}[htb!]
\centering
\includegraphics[width=12cm]{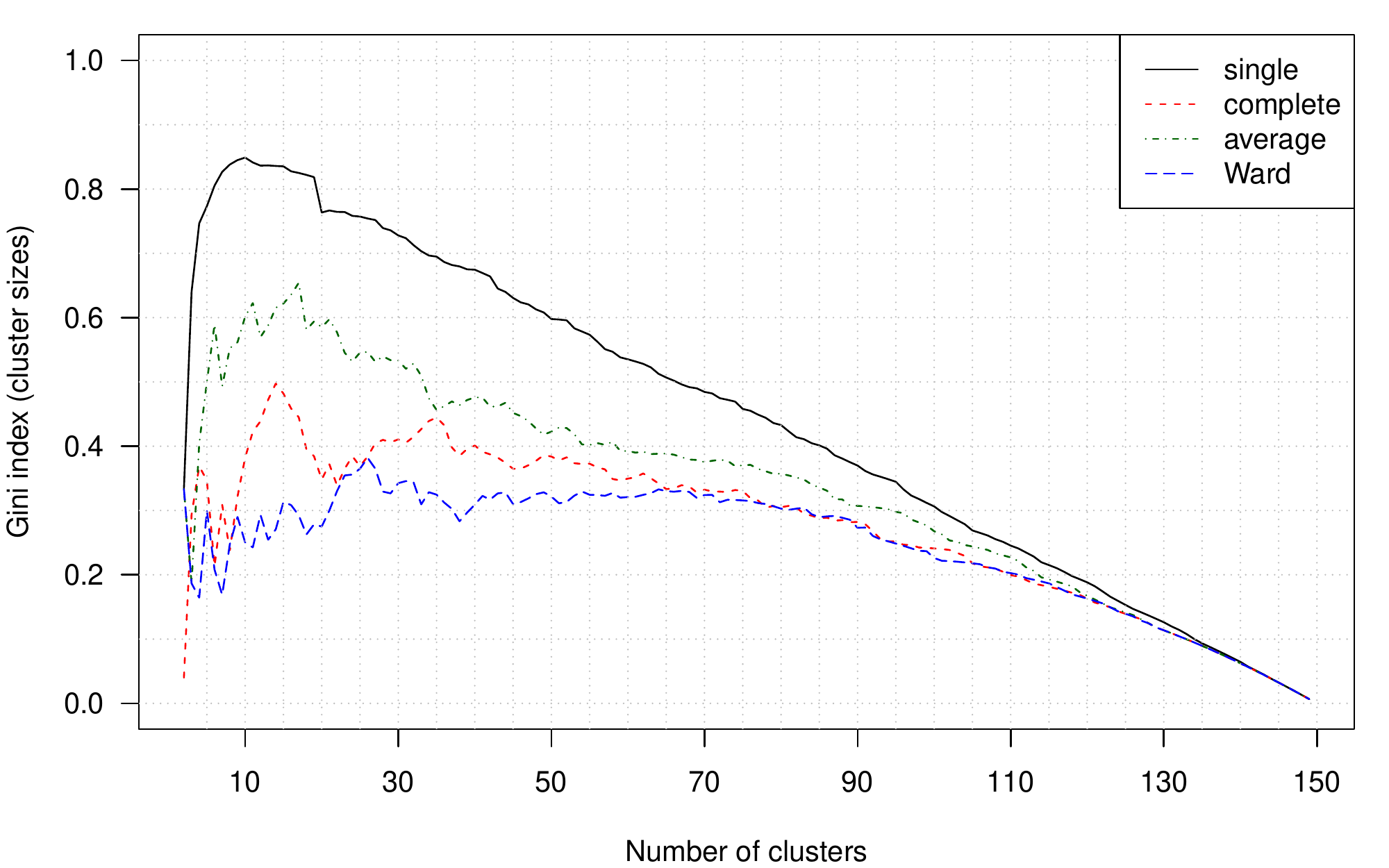}
  \caption{The Gini-indices of the cluster size distributions in the case of the \textit{Iris} data set:
  single, average, complete, and Ward linkage.}
  \label{fig:gini_iris}
\end{figure}

\section{The Genie algorithm and its evaluation}\label{Sec:algorithm}

\subsection{New linkage criterion}\label{Sec:NewLinkage}

In order to compensate the drawbacks of the single linkage scheme,
while retaining its simplicity, we propose the following linkage criterion
which from now on we refer to as \emph{the Genie algorithm}.
Let $\func{F}$ be a fixed inequity measure (e.g., the Gini-index)
and  $g\in(0,1]$ be some threshold.
At step $j$:
\begin{enumerate}
   \item if $\func{F}(c_{(n-j)}, \dots, c_{(1)}) \le g$, $c_i = |C_{i}^{(j)}|$,
apply the original single linkage criterion:
\[
   \mathrm{arg}\min_{(u,v), u < v} \left(
\min_{\vect{a}\in C_u^{(j)}, \vect{b}\in C_v^{(j)}} \mathfrak{d}(\vect{a},\vect{b})
\right),
\]
\item otherwise, i.e., if  $\func{F}(c_{(n-j)}, \dots, c_{(1)}) > g$,
restrict the search domain only to pairs of clusters such that one of them
is of the smallest size:
\[
   \mathrm{arg}\min_{\substack{(u,v), u < v,\\ |C_u^{(j)}| = \min_i |C_i^{(j)}|\text{ or }\\|C_v^{(j)}| = \min_i |C_i^{(j)}|}} \left(
\min_{\vect{a}\in C_u^{(j)}, \vect{b}\in C_v^{(j)}} \mathfrak{d}(\vect{a},\vect{b})
\right).
\]
\end{enumerate}
This modification prevents drastic increases of the chosen inequity measure
and forces early merges of small clusters with some other ones.
Figure~\ref{fig:gini_iris2} gives the cluster size distribution
(compare Figure~\ref{fig:gini_iris}) in case of the proposed algorithm
and the \textit{Iris} data set. Here, we used four different thresholds for
the Gini-index, namely, $0.3$, $0.4$, $0.5$, and $0.6$.
Of course, whatever the choice of the inequity index,
if $g=1$, then we obtain  the single linkage scheme.

\begin{figure}[htb!]
\centering
\includegraphics[width=12cm]{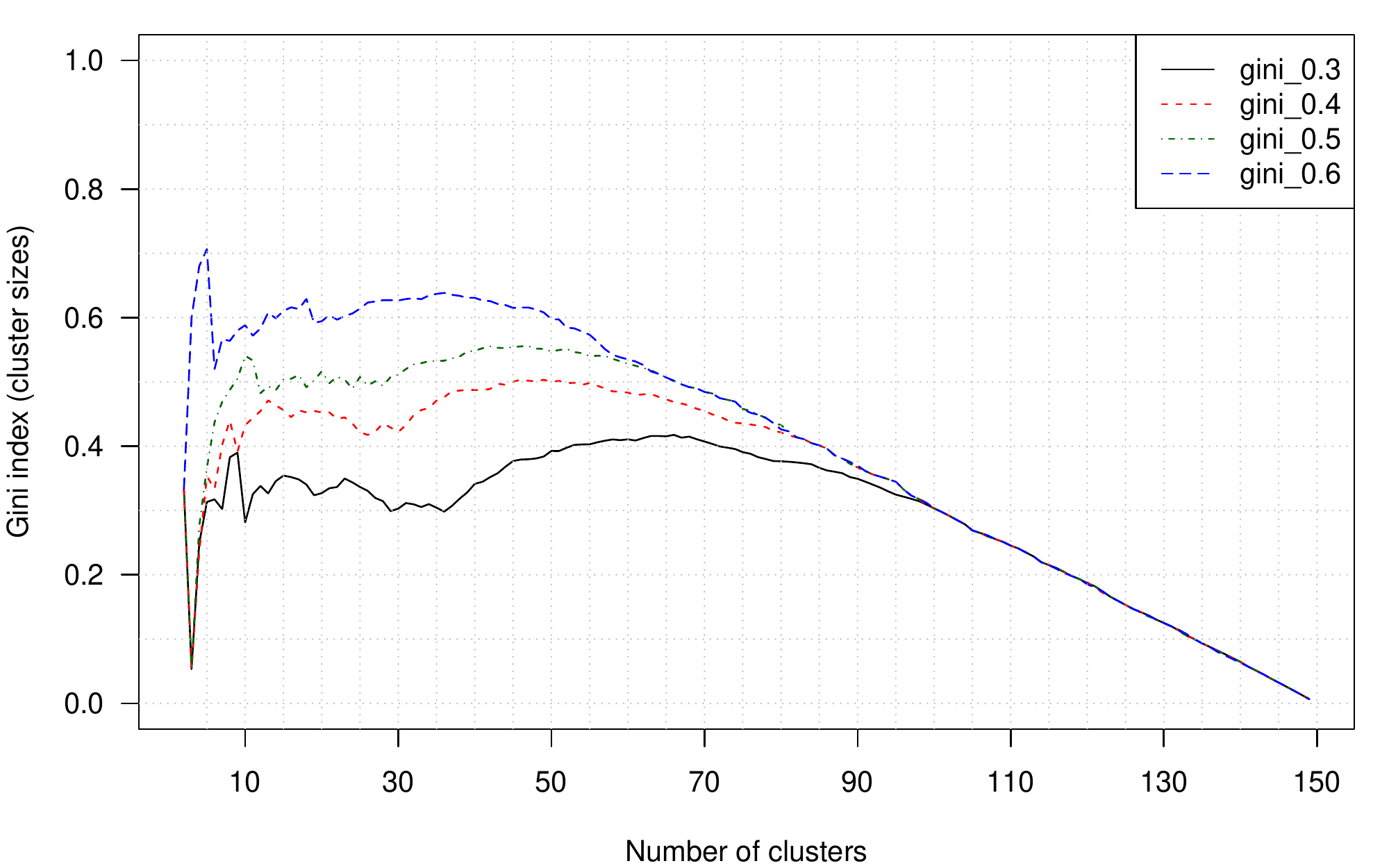}
  \caption{The Gini-indices of the cluster size distributions in the case of the \textit{Iris} data set:
  the Genie algorithm; the Gini-index thresholds are set to $0.3$, $0.4$, $0.5$, and $0.6$.}
  \label{fig:gini_iris2}
\end{figure}

On a side note, let us point out a small issue that may affect the way
the dendrograms resulting in applying the Genie algorithm are plotted.
As now the ``heights'' at which clusters are merged
are not being output in a nondecreasing order, they should somehow be adjusted
when drawing such diagrams.
Yet, the so-called reversals (inversions, departures from ultrametricity) are a well-known phenomenon, see
\cite{LegendreLegendre2003:numericalecology}, and may also occur in other linkages too
(e.g., the nearest-centroid one).

\subsection{Benchmark data sets description}\label{Sec:DSDescr}

In order to evaluate the proposed Genie linkage scheme,
we shall test it in various spaces (points in $\mathbb{R}^d$
for some $d$, images, character strings, etc.),
on balanced and unbalanced data of different shapes.
Below we describe the 29 benchmark data sets used,
21 of which are in the Euclidean space and the remaining 8 ones are non-Euclidean.
All of them are available for download and inspection
at \url{http://www.gagolewski.com/resources/data/clustering/}.
Please notice that most of the data sets have already been used in the literature
for verifying the performance of various other algorithms.

In each case below, $n$ denotes the number of objects and $d$ -- the space dimensionality
(if applicable).
For every data set its author(s) provided a vector of true (reference) cluster labels.
Therefore, we below denote with $k$ the true number of underlying clusters
(resulting dendrograms should be cut at this very level during the tests).
Moreover, we include the information on whether the reference clusters are of balanced sizes.
If this is not the case, the Gini-index of cluster sizes is reported.

\paragraph{Character strings}
\begin{itemize*}
\item \texttt{actg1} ($n=2500$, mean string length $d=99.9, k=20$, balanced),
\texttt{actg2} ($n=2500$, mean $d=199.9, k=5$, the Gini-index of the reference
cluster sizes is equal to 0.427),
\texttt{actg3} ($n=2500$, mean $d=250.2, k=10$, Gini-index 0.429) --
character strings (of varying lengths)
over the $\{a,c,t,g\}$ alphabet. First, $k$ random strings
(of identical lengths) were generated for the purpose of being cluster
centers. Each string in the data set was created by selecting a random cluster
center and then performing many Levenshtein edit operations (character insertions,
deletions, substitutions) at randomly chosen positions. For use
with the Levenshtein distance.
\item \texttt{binstr1} ($n=2500, d=100, k=25$, balanced),
\texttt{binstr2} ($n=2500, d=200, k=5$, Gini-index 0.432),
\texttt{binstr3} ($n=2500, d=250, k=10$, Gini-index 0.379) --
character strings (each of the same length $d$)
over the \{0,1\} alphabet. First, $k$ random strings were generated
for the purpose of being cluster centers. Each string in the data set
was created by selecting a random cluster center and then
modifying its digits at randomly chosen positions. For use
with the Hamming distance.
\end{itemize*}

\paragraph{Images}
These are the first 2000 digits from the famous MNIST database of
handwritten digits by Y.~LeCun et al.,
see \url{http://yann.lecun.com/exdb/mnist/}; clusters are approximately balanced.

\begin{itemize*}
\item \texttt{digits2k\_pixels} ($d=28\times 28$, $n=2000$, $k=10$) --
data consist of $28\times 28$ pixel images.
For testing purposes, we use the Hamming distance on corresponding monochrome
pixels (color value is marked with $1$ if the gray level is in the $(32,255]$
interval and $0$ otherwise).
\item \texttt{digits2k\_points} ($d=2$, $n=2000$, $k=10$) --
based on the above data set, we represent the contour of each digit as a set of points
in $\mathbb{R}^2$. Brightness cutoff of 64 was used to generate the data.
Each digit was shifted, scaled, and rotated if needed. For testing, we use
the Hausdorff (Euclidean-based) distance.
\end{itemize*}

\paragraph{SIPU benchmark data sets}
Researchers from the Speech and Image Processing Unit, School of Computing, University of
Eastern Finland prepared a list of exemplary benchmarks, which is
available at \url{http://cs.joensuu.fi/sipu/datasets/}. The data sets have already been used
in a number of papers. Because of the problems with computing the other linkages
in R as well as in Python, see the next section for discussion,
we chose only the data sets of sizes $\le$ 10000.
Moreover, we omitted the cases in which
all the algorithms worked flawlessly, meaning that the underlying clusters
were separated too well. In all the cases, we rely on the Euclidean distance.
\begin{itemize*}
\item \texttt{s1} ($n=5000, d=2, k=15$), \texttt{s2} ($n=5000, d=2, k=15$),
\texttt{s3} ($n=5000, d=2, k=15$), \texttt{s4} ($n=5000, d=2, k=15$) --
S-sets \cite{FrantiVirmajoki2006:ssets}. Reference clusters are more or less balanced.
\item \texttt{a1} ($n=3000, d=2, k=20$),
      \texttt{a2} ($n=5250, d=2, k=35$),
      \texttt{a3} ($n=7500, d=2, k=50$) -- A-sets \cite{KarkkainenFranti2002:asets}.
      Classes are fully balanced.
\item
\texttt{g2-2-100} ($n=2048, d=2, k=2$),
\texttt{g2-16-100} ($n=2048, d=16, k=2$),
\texttt{g2-64-100} ($n=2048, d=64, k=2$)
-- G2-sets. Gaussian clusters of varying dimensions, high variance.
Clusters  are fully balanced.

\item \texttt{unbalance} ($n=6500, d=2, k=8$). Unbalanced clusters, the Gini-index
of reference cluster sizes is 0.626.
\item \texttt{Aggregation} ($n=788, d=2, k=7$) \cite{GionisETAL2007:clustagg}. Gini-index 0.454.
\item \texttt{Compound} ($n=399, d=2, k=6$) \cite{Zahn1971:gestalt}. Gini-index  0.440.
\item \texttt{pathbased} ($n=300, d=2, k=3$) \cite{ChangYeung2008:pathbased}. Clusters  are more or less balanced.
\item \texttt{spiral} ($n=312, d=2, k=3$) \cite{ChangYeung2008:pathbased}. Clusters  are more or less balanced.
\item \texttt{D31} ($n=3100, d=2, k=31$) \cite{VeenmanETAL2002:maxvar}. Clusters  are fully balanced.
\item \texttt{R15} ($n=600, d=2, k=15$) \cite{VeenmanETAL2002:maxvar}. Clusters  are fully balanced.
\item \texttt{flame} ($n=240, d=2, k=2$) \cite{FuMedico2005:flame}. Gini-index  0.275.
\item \texttt{jain} ($n=373, d=2, k=2$) \cite{JainLaw2005:dilemma}. Gini-index  0.480.
\end{itemize*}

\paragraph{Iris}
The Fisher's \textit{Iris} \cite{Fisher1936:iris} data set,
available in the R \cite{Rproject:home} \texttt{datasets} package. Again, the Euclidean distance is used.
\begin{itemize*}
\item \texttt{iris} ($d=4$, $n=150$, $k=3$) -- the original data set. Fully balanced clusters.
\item \texttt{iris5} ($d=4$, $n=105$, $k=3$) -- an unbalanced version of the above one,
in which we took only five last observations from the first group (\textit{iris setosa}).
Gini-index 0.429.
\end{itemize*}

\subsection{Benchmark results}

In order to quantify the degree of agreement between two $k$-partitions of a given set,
the notion of the FM-index  \cite{FowlkesMallows1983:FMindex} is very often used.

\begin{definition}
Let $\mathcal{C}=\{C_1,\dots,C_{k}\}$ and $\mathcal{C}'=\{C_1',\dots,C_{k}'\}$
be two $k$-partitions of the set $\{\vect{x}^{(1)},\dots,\vect{x}^{(n)}\}$.
The Fowlkes-Mallows (FM) index is given by:
\[
   \text{FM-index}(\mathcal{C},\mathcal{C}') = \frac{\sum_{i=1}^k \sum_{j=1}^k m_{i,j}^2 -n}%
   {\sqrt{\left(\sum_{i=1}^k \left(\sum_{j=1}^k m_{i,j}\right)^2 -n\right)
   \left(\sum_{j=1}^k \left(\sum_{i=1}^k m_{i,j}\right)^2 -n\right)}}\in[0,1],
\]
where $m_{i,j}=\left| C_{i}\cap C_{j}' \right|$.
\end{definition}

If the two partitions are equivalent (equal up to a permutation of subsets in one of the $k$-partitions), then
the FM-index is equal to $1$. Moreover, if each pair of objects
that appear in the same set in $\mathcal{C}$ appear in two different sets in $\mathcal{C}'$,
then the index is equal to $0$.

Let us compare the performance of the Genie algorithm (with $\func{F}$ set
to be the Gini-index; five different thresholds, $g\in\{0.2,0.3,\dots,0.6\}$, are used)
as well as the single, average, complete, and Ward linkage schemes
(as implemented in the \texttt{hclust()} function from the R \cite{Rproject:home}
package \texttt{stats}).
In order to do so, we compute the values of
$\text{FM-index}(\mathcal{C},\mathcal{C}')$, where
$\mathcal{C}$ denotes the vector of true (reference) cluster labels
(as described in Section~\ref{Sec:DSDescr}),
while $\mathcal{C}'$ is the clustering obtained by cutting at an appropriate level
the dendrogram returned by a hierarchical clustering algorithm being investigated.
However, please observe that for some benchmark data sets the distance
matrices consist of non-unique elements. As a result, the output of the
algorithms may vary slightly from call to call (this is the case of all the tested methods).
Therefore, we shall report the median FM-index across 10 runs of randomly
permuted observations in each benchmark set.

Table~\ref{Tab:Benchmarks} gives the FM-indices for the 9 clustering methods
and the 29 benchmark sets. Best results are marked with bold font.
Aggregated basic summary statistics (minimum, quartiles, maximum, arithmetic mean,
and standard deviation) for all the benchmark sets are provided
in Table~\ref{Tab:BenchmarksSummary}.
Moreover, Figure~\ref{Fig:Benchmarks} depicts violin plots of the FM-index distribution%
\footnote{A violin depicts a box-and-whisker plot (boxes range from the 1st to the 3rd quartile,
the median is marked with a white dot) together with a kernel density estimator
of the empirical distribution.}.

\begin{table}[p!]
\caption{\label{Tab:Benchmarks} FM-indices for the 29 benchmark sets and the 9 hierarchical clustering methods studied.}
\centering\small

\hspace*{-3em}
\begin{tabular}{l|rrrr|rrrrr}
  \hline
benchmark & single & complete & Ward & average  & gini\_0.2 & gini\_0.3 & gini\_0.4 & gini\_0.5 & gini\_0.6 \\
  \hline\hline

actg1 &  \color{red2} 0.222 & \color{green2} 0.997 & \color{green2} \textbf{0.998} & \color{green2} \textbf{0.998} & \color{green2} 0.996 & 0.941 & 0.947 & 0.905 & \color{red2} 0.624\\
actg2 &  \color{red2} 0.525 & \color{green2} \textbf{1.000} & \color{green2} \textbf{1.000} & \color{green2} \textbf{1.000} & \color{green2} 0.975 & \color{green2} 0.975 & \color{green2} 0.976 & \color{green2} \textbf{1.000} & \color{green2} \textbf{1.000}\\
actg3 &  \color{red2} 0.383 & \color{green2} \textbf{1.000} & \color{green2} \textbf{1.000} & \color{green2} \textbf{1.000} & 0.884 & \color{green2} 0.975 & \color{green2} 0.975 & \color{green2} \textbf{1.000} & 0.840\\
binstr1 &   \color{red2} 0.198 & 0.874 & 0.942 & 0.947 & \color{green2} \textbf{0.952} & 0.908 & 0.863 & 0.749 & \color{red2} 0.542\\
binstr2 &   \color{red2} 0.525 & \color{green2} 0.989 & \color{green2} \textbf{0.994} & \color{green2} \textbf{0.994} & 0.907 & 0.909 & \color{green2} 0.965 & \color{green2} 0.965 & 0.819\\
binstr3 &   \color{red2} 0.368 & 0.946 & \color{green2} 0.969 & \color{green2} \textbf{0.971} & 0.832 & 0.931 & 0.937 & 0.811 & \color{red2} 0.692\\
digits2k\_pixels & \color{red2} 0.315 & \color{red2} 0.310 & \color{red2} 0.561 & \color{red2} 0.326 & \color{red2} \textbf{0.584} & \color{red2} 0.473 & \color{red2} 0.473 & \color{red2} 0.368 & \color{red2} 0.321\\
digits2k\_points & \color{red2} 0.315 & \color{red2} 0.256 & \color{red2} 0.458 & \color{red2} 0.280 & \color{red2} \textbf{0.671} & \color{red2} 0.601 & \color{red2} 0.559 & \color{red2} 0.438 & \color{red2} 0.405\\
\hline
s1 &  \color{red2} 0.589 & \color{green2} 0.973 & \color{green2} 0.984 & \color{green2} 0.983 & \color{green2} \textbf{0.989} & \color{green2} \textbf{0.989} & \color{green2} \textbf{0.989} & \color{green2} \textbf{0.989} & \color{green2} \textbf{0.989}\\
s2 &  \color{red2} 0.257 & 0.807 & 0.912 & 0.918 & \textbf{0.921} & \textbf{0.921} & 0.791 & 0.804 & 0.767\\
s3 &  \color{red2} 0.257 & \color{red2} 0.548 & \color{red2} 0.699 & \color{red2} 0.636 & \textbf{0.708} & \color{red2} 0.690 & \color{red2} 0.610 & \color{red2} 0.609 & \color{red2} 0.559\\
s4 &  \color{red2} 0.257 & \color{red2} 0.468 & \color{red2} 0.585 & \color{red2} 0.546 & \color{red2} \textbf{0.644} & \color{red2} 0.620 & \color{red2} 0.563 & \color{red2} 0.529 & \color{red2} 0.482\\
a1 &  \color{red2} 0.564 & 0.920 & 0.918 & 0.929 & \textbf{0.940} & 0.905 & 0.901 & 0.849 & 0.776\\
a2 &  \color{red2} 0.480 & 0.911 & 0.924 & 0.936 & \color{green2} \textbf{0.951} & 0.925 & 0.903 & 0.843 & 0.703\\
a3 &  \color{red2} 0.449 & 0.919 & 0.939 & 0.945 & \color{green2} \textbf{0.958} & 0.940 & 0.923 & 0.836 & 0.743\\
g2-2-100 &  \textbf{0.707} & \color{red2} 0.586 & \color{red2} 0.598 & 0.706 & \color{red2} 0.601 & \color{red2} 0.602 & \color{red2} 0.637 & \color{red2} 0.648 & \color{red2} 0.648\\
g2-16-100 & 0.707 & 0.897 & \textbf{0.923} & 0.707 & 0.842 & \color{red2} 0.697 & \color{red2} 0.697 & \color{red2} 0.697 & 0.704\\
g2-64-100 & 0.707 & \color{green2} \textbf{1.000} & \color{green2} \textbf{1.000} & 0.707 & \color{green2} 0.999 & \color{green2} 0.999 & \color{green2} 0.999 & \color{green2} 0.999 & \color{green2} 0.999\\
unbalance & \color{green2} 0.999 & 0.775 & \color{green2} \textbf{1.000} & \color{green2} \textbf{1.000} & 0.723 & 0.730 & 0.775 & 0.844 & 0.911\\
Aggregation &  0.861 & 0.833 & 0.842 & \color{green2} \textbf{1.000} & \color{red2} 0.582 & \color{red2} 0.657 & 0.816 & 0.908 & 0.894\\
Compound &  0.830 & 0.855 & \color{red2} 0.653 & \textbf{0.862} & \color{red2} 0.638 & \color{red2} 0.649 & \color{red2} 0.637 & 0.708 & \textbf{0.889}\\
pathbased & \color{red2} 0.573 & \color{red2} 0.595 & \color{red2} 0.674 & \color{red2} 0.653 & \textbf{0.751} & \textbf{0.751} & \textbf{0.751} & \textbf{0.751} & \textbf{0.751}\\
spiral & \color{green2} \textbf{1.000} & \color{red2} 0.339 & \color{red2} 0.337 & \color{red2} 0.357 & \color{green2} \textbf{1.000} & \color{green2} \textbf{1.000} & \color{green2} \textbf{1.000} & \color{green2} \textbf{1.000} & \color{green2} \textbf{1.000}\\
D31 & \color{red2} 0.349 & 0.926 & 0.923 & 0.910 & \textbf{0.937} & 0.903 & 0.828 & 0.742 & \color{red2} 0.695\\
R15 & \color{red2} 0.637 & \color{green2} 0.980 & \color{green2} 0.983 & \color{green2} \textbf{0.990} & \color{green2} 0.987 & \color{green2} 0.987 & \color{green2} 0.987 & 0.823 & \color{red2} 0.637\\
flame &  0.730 & \color{red2} 0.623 & \color{red2} 0.624 & 0.731 & \color{green2} \textbf{1.000} & \color{green2} \textbf{1.000} & \color{green2} \textbf{1.000} & \color{green2} \textbf{1.000} & \color{green2} \textbf{1.000}\\
jain &   0.804 & 0.922 & 0.790 & 0.922 & \color{green2} \textbf{1.000} & \color{green2} \textbf{1.000} & \color{green2} \textbf{1.000} & \color{green2} \textbf{1.000} & \color{green2} \textbf{1.000}\\
iris &   0.764 & 0.769 & 0.822 & 0.841 & \textbf{0.923} & \textbf{0.923} & \textbf{0.923} & \textbf{0.923} & 0.754\\
iris5 &  \color{red2} 0.691 & \color{red2} 0.665 & 0.738 & 0.765 & 0.764 & 0.764 & 0.764 & \textbf{0.886} & \color{red2} 0.673\\
     \hline
\color{green2}FM-index $\ge 0.95$ &\color{red2}     2 &   7 & \color{green2}  9 & \color{green2}  9 &\color{green2} \bfseries 11 &   8 &  \color{green2} 9 &   8 &   6 \\
FM-index $\ge 0.9$ &\color{red2}2 &  13 & \color{green2} 16 & \color{green2} 16 & \color{green2}16 & \color{green2}\bfseries 18 &  15 &  11 &   7 \\
\color{red2}FM-index $< 0.7$ & \color{red2} 19 &   9 &   9 &\bfseries\color{green2}   6 & \bfseries\color{green2}  6 & \color{green2}  8 &  \color{green2} 7 &\bfseries\color{green2}   6 &  11 \\
\color{red2}FM-index $< 0.5$ & \color{red2} 12 &   4 & \color{green2}  2 &   3 & \bfseries\color{green2}  0 &\color{green2}   1 & \color{green2}  1 & \color{green2}  2 &   3 \\
\hline
\end{tabular}
\end{table}

\begin{table}[p!]
\caption{\label{Tab:BenchmarksSummary}  Basic summary statistics of the FM-index distribution over the 29 benchmark sets.}
\centering\small

\hspace*{-1em}
\begin{tabular}{l|rrrr|rrrrr}
  \hline
 & single & complete & Ward & average & gini\_0.2 & gini\_0.3 & gini\_0.4 & gini\_0.5 & gini\_0.6 \\
  \hline
  \hline
Min & \color{red2} 0.198 & \color{red2} 0.256 & \color{red2} 0.337 & \color{red2} 0.280 & \color{red2} \textbf{0.582} & \color{red2} 0.473 & \color{red2} 0.473 & \color{red2} 0.368 & \color{red2} 0.321\\
Q1 &  \color{red2} 0.349 & \color{red2} 0.623 & \color{red2} 0.674 & 0.707 & 0.723 & \color{red2} 0.697 & \textbf{0.751} & 0.742 & \color{red2} 0.648\\
Median & \color{red2} 0.564 & 0.874 & 0.918 & 0.918 & \textbf{0.921} & 0.909 & 0.901 & 0.843 & 0.751\\
Q3 &  0.707 & 0.946 & \color{green2} \textbf{0.983} & \color{green2} \textbf{0.983} & \color{green2} 0.975 & \color{green2} 0.975 & \color{green2} 0.975 & \color{green2} 0.965 & 0.894\\
Max & \color{green2} \textbf{1.000} & \color{green2} \textbf{1.000} & \color{green2} \textbf{1.000} & \color{green2} \textbf{1.000} & \color{green2} \textbf{1.000} & \color{green2} \textbf{1.000} & \color{green2} \textbf{1.000} & \color{green2} \textbf{1.000} & \color{green2} \textbf{1.000}\\
Mean &   \color{red2} 0.554 & 0.782 & 0.820 & 0.812 & \textbf{0.850} & 0.840 & 0.834 & 0.815 & 0.752\\
\hline
St.Dev. &   \color{red2} {0.235} & \color{red2} 0.225 &0.188 & \color{red2} 0.215 & \color{green2} \textbf{0.147} & \color{green2} 0.156 & \color{green2} 0.159 & 0.172 & 0.185\\
   \hline
\end{tabular}
\end{table}

\begin{figure}[p!]
\centering
\includegraphics[width=12cm]{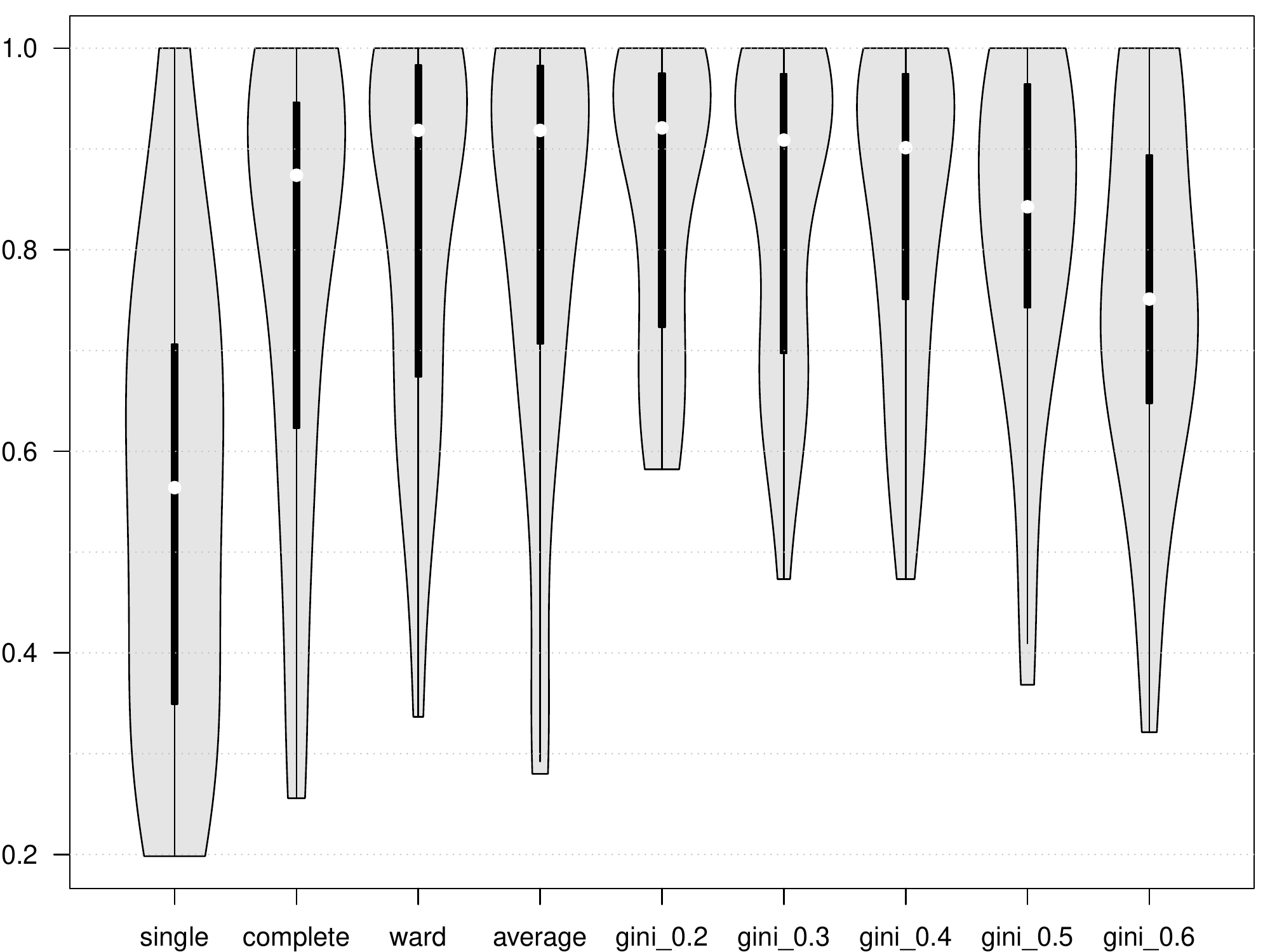}
\caption{\label{Fig:Benchmarks} Violin plots of the FM-index distribution over the  29 benchmark sets.}
\end{figure}

The highest mean and median FM scores were obtained for the Genie algorithm
with a threshold of $g=0.2$. This setting also leads to the best minimal (worst-case) FM-index.
A general observation is that all the tested Gini-index thresholds gave the lowest
variance in the FM-indices.

It is of course unsurprising that there is no free lunch in data clustering
-- no algorithm is perfect on all the data sets.
All the tested hierarchical clustering algorithms were far from perfect (FM $< 0.7$)
on the \texttt{digits2k\_pixels}, \texttt{digits2k\_points}, and
\texttt{s4} data sets.
However, in overall, the single linkage clustering is particularly bad
(except for the \texttt{unbalance} and \texttt{spiral} data sets).
Among the other algorithms, the complete linkage and the Genie algorithm for $g\ge 0.5$
give the lowest average and median FM-index. All the other methods
(Genie with thresholds of $g<0.5$, Ward, average linkage)
are very competitive. Also please keep in mind that for the Genie algorithm
with a low inequity index threshold we expect a loss in performance
for unbalanced clusters sizes

\begin{table}[t!]
\caption{\label{Tab:BenchmarksSummary2}  Basic summary statistics of the FM-index distribution over the 21 Euclidean benchmark sets.}
\centering\footnotesize

\hspace*{-1.5cm}
\begin{tabular}{rrrrrrrrrrrr}
  \hline
 & single & complete & ward & average & gini\_0.2 & gini\_0.3 & gini\_0.4 & gini\_0.5 & gini\_0.6 & BIRCH & k-means \\
  \hline
  Min & \color{red2}0.257 & \color{red2}0.339 & \color{red2}0.337 & \color{red2}0.357 & \color{red2}0.582 &\color{red2}\bf  0.602 &\color{red2} 0.563 &\color{red2} 0.529 &\color{red2} 0.482 &\color{red2} 0.350 & \color{red2}0.327 \\
  Q1 & \color{red2}0.480 & \color{red2}0.623 & \color{red2}0.674 & 0.707 & 0.723 & \color{red2}0.697 &\bf  0.751 & 0.742 &\color{red2} 0.695 & \color{red2}0.653 & 0.701 \\
  Median & \color{red2}0.691 & 0.833 & 0.842 & 0.862 &\bf  0.923 & 0.905 & 0.828 & 0.843 & 0.754 & 0.894 & 0.821 \\
  Q3 & 0.764 & 0.920 & 0.924 & 0.936 &\color{green2}\bf  0.987 &\color{green2}\bf  0.987 &\color{green2}\bf  0.987 & 0.923 & 0.911 & 0.924 & \color{green2}0.969 \\
  Max & \color{green2}\bf 1.000 &\color{green2}\bf  1.000 & \color{green2}\bf 1.000 &\color{green2}\bf  1.000 &\color{green2}\bf  1.000 & \color{green2}\bf 1.000 &\color{green2}\bf  1.000 &\color{green2}\bf  1.000 &\color{green2}\bf  1.000 &\color{green2}\bf  1.000 & \color{green2}\bf 1.000 \\
  Mean & 0.629 & 0.777 & 0.803 & 0.812 &\bf  0.850 & 0.841 & 0.833 & 0.828 & 0.789 & 0.801 & 0.816 \\
  \hline
  St.Dev. & \color{red2}0.224 & 0.187 & 0.177 & 0.172 & \color{green2}0.150 & \color{green2}0.146 &\color{green2} 0.145 & \color{green2}\bf 0.138 &\color{green2} 0.156 & 0.183 & 0.177 \\
   \hline
\end{tabular}
\end{table}

Finally, let us compare the performance of the 9 hierarchical clustering algorithms
as well as the $k$-means and BIRCH (\texttt{threshold=0.5}, \texttt{branching\_factor=10})
procedures (both implemented in the \texttt{scikit-learn} package for Python). Now we are of course limited only to data
in the Euclidean space, therefore the number of benchmark data sets reduces to 21.
Table~\ref{Tab:BenchmarksSummary2} gives basic summary statistics of the FM-index distributions.
We see that in this case the Genie algorithm ($g<0.5$) outperforms all the methods being compared too.

Taking into account our algorithm's out-standing performance
and -- as it shall turn out in the next section -- relatively low run-times
(especially on larger data sets and compared with the average or Ward linkage),
the proposed method may be recommended for practical use.

\section{Possible implementations of the Genie linkage algorithm}\label{Sec:implementation}

Having shown the high usability of the new approach, let us discuss
some ways to implement the Genie clustering method in very detail.
We have already stated that the most important part of computing the single
linkage algorithm consists of determining a minimal spanning tree
(this can be non-unique if there are pairs of objects with identical
dissimilarity degrees) of the complete undirected weighted graph
corresponding to objects in $\mathcal{X}$ and the pairwise dissimilarities.
It turns out that we have what follows.

\begin{theorem}\label{TheTheorem}
The Genie linkage criterion can be implemented based on an MST.
\end{theorem}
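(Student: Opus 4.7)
The plan is to reduce every merge decision made by Genie to the selection of a single edge of a fixed minimum spanning tree $T$ of the complete weighted graph on $(\mathcal{X}, \mathfrak{d})$. The only tool required is the classical cut property of minimum spanning trees: for any nontrivial partition $(S, \mathcal{X}\setminus S)$, the minimum weight of an edge of $T$ crossing the cut equals the minimum weight of any edge of the complete graph crossing it. Once this is in place, the theorem follows by a short induction on $j$.

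The inductive invariant I would carry is that at every iteration $j$ each cluster $C_i^{(j)}$ induces a subtree of $T$. The base case $j=0$ is immediate since singletons are trivially connected in $T$. For the inductive step, consider the Genie rule applied at step $j$. In Case 1 (inequity $\le g$), picking any cluster $C_u^{(j-1)}$ attaining the outer minimum and applying the cut property to $(C_u^{(j-1)}, \mathcal{X}\setminus C_u^{(j-1)})$ shows that the optimal single-linkage distance is realized by an edge of $T$; its second endpoint lies in some cluster $C_{v'}^{(j-1)}$, which is then as good a merge partner as the original $C_v^{(j-1)}$. In Case 2 (inequity $> g$), let $C^*$ be a smallest cluster participating in an optimal pair; by the cut property on $(C^*, \mathcal{X}\setminus C^*)$ the minimum distance from $C^*$ to any other cluster is attained by an MST edge incident to $C^*$, and the same re-routing argument yields an MST-realized Case 2 merge. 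In either case the merging edge belongs to $T$ and joins two subtrees of $T$ (by the inductive hypothesis), so their union plus this edge is again a subtree, closing the induction.

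From this argument one directly reads off an implementation: precompute $T$ once, then maintain cluster labels and sizes (e.g., via a disjoint-set data structure) and at each step scan only those $n-1$ edges of $T$ whose endpoints still lie in distinct clusters, restricting the scan to edges touching a smallest cluster whenever the inequity threshold is exceeded. The main technical difficulty I foresee is not the correctness proof, which is essentially a one-line appeal to the cut property, but the design of data structures that efficiently support queries for the minimum outgoing $T$-edge of a given cluster after arbitrary merges; I expect the subsequent section of the paper to address this. A minor subtlety worth mentioning is tie-breaking: when $\mathfrak{d}$ takes repeated values $T$ need not be unique, and the Genie output can depend on which MST is chosen, but the same ambiguity already affects the ordinary single linkage algorithm, so it does not undermine the reduction.
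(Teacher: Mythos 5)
Your proof is correct, and it takes a slightly different route than the paper's own (very brief) argument. The paper invokes Prim's Principle~2 in the opposite direction: iteratively connecting any two disconnected minimal spanning subtrees by a minimum-weight edge produces an MST, and since the Genie criterion always selects such an edge, the edges chosen along the merge sequence themselves constitute a minimum spanning tree -- hence an MST carries all the information needed. You instead fix an arbitrary MST $T$ up front and show, via the cut property applied to $(C,\mathcal{X}\setminus C)$ for a suitable cluster $C$ (a cluster attaining the optimum in Case~1, a smallest cluster participating in an optimal pair in Case~2, where eligibility of all pairs containing that smallest cluster guarantees the cut minimum equals the Case~2 optimum), that every Genie merge can be re-routed to an edge of $T$ realizing the same minimal dissimilarity, with the invariant that clusters induce subtrees of $T$ closing the induction. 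The two arguments are converse faces of the same greedy/exchange structure of MSTs, but yours matches more closely what the implementation actually does (precompute \emph{some} MST by a Prim- or Kruskal-type procedure and drive the merges by its edges), and it makes explicit that in the presence of tied dissimilarities any MST works, the output being one of the equally valid argmin realizations -- a point the paper only acknowledges informally by remarking that the MST, and consequently the clustering, need not be unique. The paper's citation of Prim buys brevity; your version is self-contained and more precise about tie-breaking, at the cost of carrying the subtree invariant, which is not strictly needed for correctness but neatly explains why exactly the $n-1$ edges of $T$ are consumed.
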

\begin{proof}[Sketch of the proof.]
By \cite[Principle~2]{Prim1957:MST}, in order to construct a minimal spanning tree,
it is sufficient to connect any two disconnected minimal spanning subtrees
via an edge of minimal weight and iterate such a process until a single connected
tree is obtained. As our linkage criterion (Section~\ref{Sec:NewLinkage})
always chooses such an edge, the proof is complete.
\end{proof}

\begin{figure}[htb!]
\centering
\begin{framed}
\begin{enumerate}
   \item[0.] Input: $\vect{x}^{(1)},\dots,\vect{x}^{(n)}$ -- $n$ objects,
   $g\in(0,1]$ -- inequity index threshold,\newline
$\mathfrak{d}$ -- a dissimilarity measure;
   \item[1.] \textit{ds} = DisjointSets($\{1\}, \{2\}, \dots, \{n\}$);
   \item[2.] \textit{m} = \textsf{MST}$(\vect{x}^{(1)},\dots,\vect{x}^{(n)})$;\hfill{\it /* see Theorem 1 */ }
   \item[3.] \textit{pq} = MinPriorityQueue<PQItem>($\emptyset$);

   \hfill{\it /* PQItem structure: (index1, index2, dist);

   \hfill pq returns the element with the smallest dist */}
   \item[4.] \textbf{for} each weighted edge $(i,j,d_{i,j})$ in \textit{m}:
   \begin{enumerate}
      \item[4.1.] \textit{pq}.\textsf{push}(PQItem($i$,$j$,$d_{i,j}$));
   \end{enumerate}
   \item[5.] \textbf{for} $j=1,2,\dots,n-1$:
   \begin{enumerate}
      \item[5.1.] \textbf{if} \textit{ds}.\textsf{compute\_inequity()} $\le$ $g$: \hfill{\it /* e.g., the Gini-index */ }
      \begin{enumerate}
         \item[5.1.1.] $t$ = \textit{pq}.\textsf{pop}(); \hfill{\it /* PQItem with the least dist */ }
      \end{enumerate}
      \item[] \textbf{else}:
      \begin{enumerate}
         \item[5.1.2.] $t$ = \textit{pq}.\textsf{pop\_conditional} $\Big(t$: \textit{ds}.\textsf{size}($t$.index1) = \textit{ds}.\textsf{min\_size}()\\
         \hspace*{9.15em}\textbf{or} \textit{ds}.\textsf{size}($t$.index2) = \textit{ds}.\textsf{min\_size}()$\Big)$;

         \hfill{\it /* PQItem with the least dist that fulfills the given logical condition */ }
      \end{enumerate}
      \item[5.2.] $s_1$ = \textit{ds}.\textsf{find\_set}($t$.index1);
      \item[5.3.] $s_2$ = \textit{ds}.\textsf{find\_set}($t$.index2); \hfill{\it /* assert: $s_1$ $\neq$ $s_2$ */}
      \item[5.4.] output ``linking ($s_1$, $s_2$)'';
      \item[5.5.] \textit{ds}.\textsf{link}($t$.index1, $t$.index2);
   \end{enumerate}
\end{enumerate}
\end{framed}
\caption{\label{Fig:Algorithm} A pseudocode for the Genie algorithm.}
\end{figure}

Please note that, by definition, the weighted edges appear in an MST
in no particular order -- for instance, the Prim \cite{Prim1957:MST} algorithm's output depends on the
permutation of inputs. Therefore, having established the above relation
between the Genie clustering and an MST, in Figure~\ref{Fig:Algorithm} we provide
a pseudocode of the algorithm that guarantees the right cluster merge order.
The procedure resembles the Kruskal \cite{KruskalProof} algorithm
and is fully concordant with our method's description in Section~\ref{Sec:NewLinkage}.

Observe that the very same algorithm can
be used to compute the single linkage clustering (in such a case, step 5.1 -- in which
we compute a chosen inequity measure -- as well as step 5.1.2
-- in which we test whether the inequity measure raises above a given threshold --
are never executed).

\begin{example}
Let us consider an exemplary data set consisting of 5 points in the real line:
$\vect{x}^{(1)}=0, \vect{x}^{(2)}=1, \vect{x}^{(3)}=3, \vect{x}^{(4)}=6, \vect{x}^{(5)}=10$
and let us set $\mathfrak{d}$ to be the Euclidean distance.
The minimum spanning tree -- which, according to Theorem~\ref{TheTheorem}
gives us complete information needed to compute the resulting clustering --
consists of the following
edges: $\{\vect{x}^{(1)}, \vect{x}^{(2)}\}, \{\vect{x}^{(2)}, \vect{x}^{(3)}\},
\{\vect{x}^{(3)}, \vect{x}^{(4)}\}, \{\vect{x}^{(4)},\allowbreak \vect{x}^{(5)}\}$,
with weights $1, 2, 3,$ and $4$, respectively.

Let the inequity measure $\func{F}$ be the Gini-index. If the threshold $g$ is set to $1.0$
(the single linkage), the merge steps are as follows.
Please note that the order in which we merge the clusters
is simply determined by sorting the edges in the MST increasingly by weights,
just as in the Kruskal algorithm.

\begin{center}
   \begin{tabular}{lllr}
   \hline
   step & current partitioning & MST edge & Gini-index \\
   \hline
   \hline
   0. & $\{\vect{x}^{(1)}\}, \{\vect{x}^{(2)}\}, \{\vect{x}^{(3)}\}, \{\vect{x}^{(4)}\}, \{\vect{x}^{(5)}\}$ & $\{\vect{x}^{(1)}, \vect{x}^{(2)}\}_1$ &  0.0 \\
   1. & $\{\vect{x}^{(1)}, \vect{x}^{(2)}\}, \{\vect{x}^{(3)}\}, \{\vect{x}^{(4)}\}, \{\vect{x}^{(5)}\}$  & $\{\vect{x}^{(2)}, \vect{x}^{(3)}\}_2$& 0.2 \\
   2. & $\{\vect{x}^{(1)}, \vect{x}^{(2)}, \vect{x}^{(3)}\}, \{\vect{x}^{(4)}\}, \{\vect{x}^{(5)}\}$  & $\{\vect{x}^{(3)}, \vect{x}^{(4)}\}_3$& 0.4 \\
   3. & $\{\vect{x}^{(1)}, \vect{x}^{(2)}, \vect{x}^{(3)}, \vect{x}^{(4)}\}, \{\vect{x}^{(5)}\}$  & $\{\vect{x}^{(4)}, \vect{x}^{(5)}\}_4$& 0.6 \\
   4. & $\{\vect{x}^{(1)}, \vect{x}^{(2)}, \vect{x}^{(3)}, \vect{x}^{(4)}, \vect{x}^{(5)}\}$  & --- & --- \\
   \hline
   \end{tabular}
\end{center}

However, if the threshold is set to $g=0.3$, then when proceeding form step 2 to step 3,
we need to link $\{\vect{x}^{(4)}\}$ and $\{\vect{x}^{(5)}\}$
instead of $\{\vect{x}^{(1)}, \vect{x}^{(2)}, \vect{x}^{(3)}\}$ and $\{\vect{x}^{(4)}\}$ -- that is,
the merge  is based now on a different MST edge: $\{\vect{x}^{(4)}, \vect{x}^{(5)}\}_4$
instead of $\{\vect{x}^{(3)}, \vect{x}^{(4)}\}_3$.
Therefore, the resulting 2-partition will be different than the above one:
we obtain $\{\vect{x}^{(1)}, \vect{x}^{(2)}, \vect{x}^{(3)}\}, \{\vect{x}^{(4)}, \vect{x}^{(5)}\}$
(Gini-index = 0.2).
\end{example}

\subsection{Implementation details}

In this study, we decided to focus on the Gini-index. In order to make the algorithm
time-efficient, this inequity index must be computed incrementally.
Let $g_j$ denote the Gini-index value at the time when there are $n-j$ clusters,
$j=0,1,\dots,n-1$. Initially, when each cluster is of size $1$, the Gini-index
is of course equal to $0$, hence $g_0=0$. Assume that when proceeding from step $j-1$ to $j$
we link clusters of sizes $c_{s_1}$ and $c_{s_2}$. It can easily be shown that:
\[
   g_j = \frac{
   (n-j)\,n\, g_{j-1} +\sum_{i=1}^{n-j+1} \Big(
   |c_i-c_{s_1}-c_{s_2}|-|c_i-c_{s_1}|-|c_i-c_{s_2}|
   \Big) - c_{s_2} - c_{s_1} + |c_{s_1}-c_{s_2}|
   }{(n-j-1)\,n}.
\]
In other words, after each merge operation, the index is updated, which requires $O(n)$
operations instead of $O(n^2)$ when the index is recomputed from scratch
based on the original formula \eqref{Eq:Gini}.
On a side note, it might be shown that the Gini-index can be written as a
linear combination of order statistics (compare, e.g., the notion of an OWA operator),
but the use of such a definition would require sorting the cluster size vector
in each iteration or relying on an ordered search tree-like data structure.

Moreover, please note that the algorithm is based on a non-standard implementation
of the disjoint sets data structure. Basically, the required extension
that keeps track of cluster counts can be implemented quite easily.

The implementation of the \textsf{pop\_conditional}() method in
the priority queue \textit{pq} (we used a heap-based data structure)
can be done by means of an auxiliary queue, onto which elements not fulfilling
the logical condition given in step 5.1.2~are temporarily
moved. If, after a merge operation, the inequity index is still above the desired
threshold and the minimal cluster size did not change since the previous iteration,
the auxiliary priority queue is kept as-is and we continue to seek
a cluster of the lowest cardinality from the same place. Otherwise, the elements must go back
to \textit{pq} and the search must start over.

To conclude, the cost of applying the cluster merge procedure
($O(n^2)$ pessimistically for $g<1$ and $O(n\log n)$ for $g=1$)
is not dominated by the cost of determining an MST ($O(Cn^2)$ pessimistically,
under our assumptions this involves a function $C$ of data dimensionality
and reflects the cost of computing a given dissimilarity measure $\mathfrak{d}$).
Hence, the new clustering scheme shall give us run-times
that are comparable with the single linkage method.
It is worth noting that the time complexity $(\Theta(n^2))$
as well as the memory complexity $(\Theta(n))$ of the algorithm is optimal
(as far as the whole class of hierarchical clustering algorithms
is concerned, compare \cite{Mullner2011:fastclusteralg}).

On a side note, the NN-chains algorithm, which is suitable for solving -- among others --
the complete, average, and Ward linkage clustering also has a time complexity
of $O(n^2)$ \cite{Murtagh:survey,Mullner2011:fastclusteralg},
but, as we shall see in further on, it requires
computing at least 2--5 times more pairwise distances.

\bigskip
For the sake of comparison, let us study two different algorithms to determine an MST.

\paragraph{1. An MST algorithm based on Prim's one}
A first algorithm, sketched in \cite[Figure~8]{Olson1995:parallelhierclust},
is quite similar to the one by Prim \cite{Prim1957:MST}.
Its pseudocode is given in Figure~\ref{Fig:MSTPrim}.
Note that the algorithm guarantees that exactly $(n^2-n)/2$ pairwise distances
are computed. Moreover, the inner loop can be run in parallel.
In such a case, $M$ should be a vector of indices not yet in the MST
-- due to that the threads may have random access to such an array --
and step 6.2.3 should be moved to a separate loop -- so as to
a costly critical section is avoided.

\clearpage
\begin{figure}[htb!]
\centering
\begin{framed}
\begin{enumerate}
   \item[0.] Input: $\vect{x}^{(1)},\dots,\vect{x}^{(n)}$ -- $n$ objects,
$\mathfrak{d}$ -- a dissimilarity measure;
   \item[1.] \textit{F} = $(\infty, \dots, \infty)$ ($n$ times);\hfill{\it /* $F_j$ -- index of $j$'s nearest neighbor */}
   \item[2.] \textit{D} = $(\infty, \dots, \infty)$ ($n$ times);\hfill{\it /* $D_j$ -- distance to $j$'s nearest neighbor */}
   \item[3.] \textit{lastj} = 1;
   \item[4.] $m = \emptyset$; \hfill{\it /* a resulting MST */}
   \item[5.] $M = \{ 2,3,\dots,n \}$; \hfill{\it /* indices not yet in $m$ */}
   \item[6.] \textbf{for} $i=1,2,\dots,n-1$:
   \begin{enumerate}
      \item[6.1.] $\text{\it bestj} = 1$;
      \item[6.2.] \textbf{for} each $j\in M$:
      \begin{enumerate}
         \item[6.2.1.] $d = \mathfrak{d}(\vect{x}^{(\text{\it lastj})}, \vect{x}^{(j)})$;
         \item[6.2.2.] \textbf{if} $d < D_j$:
         \begin{enumerate}
            \item[6.2.2.1.] $D_j = d$;
            \item[6.2.2.2.] $F_j = \text{\it lastj}$;
         \end{enumerate}
         \item[6.2.3.] \textbf{if} $D_j < D_{\text{\it bestj}}$:\hfill  \hfill{\it /* assert: $D_{1} = \infty$ */}
         \begin{enumerate}
            \item[6.2.3.1.] $\text{\it bestj} = j$;
         \end{enumerate}
      \end{enumerate}
      \item[6.3.] $m = m\cup\{(F_{\text{\it bestj}}, \text{\it bestj}, D_{\text{\it bestj}})\}$; \hfill{\it /* add an edge to $m$ */}
      \item[6.4.] $M = M\setminus\{ \text{\it bestj} \}$; \hfill{\it /* now this index is in $m$ */}
      \item[6.5.] \textit{lastj} = \textit{bestj};
   \end{enumerate}
   \item[7.] \textbf{return} $m$;
\end{enumerate}
\end{framed}
\caption{\label{Fig:MSTPrim} A simple $(n^2-n)/2$ algorithm to determine an MST.}
\end{figure}

\paragraph{2. An MST algorithm based on Kruskal's one}
The second algorithm considered is based on the one by Kruskal \cite{KruskalProof}
and its pseudocode is given in Figure~\ref{Fig:MSTKruskal}.
It relies on a method called \textsf{getNextNearestNeighbor}(),
which fetches the index $j$ of the next not-yet considered nearest
(in terms of increasing $\mathfrak{d}$)
neighbor of an object at index $i$ having the property that $j>i$.
If such a neighbor does not exist anymore, the function returns $\infty$.
Please observe that in the prefetch phase the calls to \textsf{getNextNearestNeighbor()}
can be run in parallel.

\medskip
A na\"{i}ve implementation of the \textsf{getNextNearestNeighbor()} function
requires either $O(1)$ time and $O(n)$ memory for each object
(i.e., $O(n^2)$ in total -- the lists of all neighbors can be stored in $n$ priority queues, one
per each object) or $O(n)$ time and $O(1)$ memory (i.e., $O(n)$ in total -- no caching done at all).
As our priority is to retain total $O(n)$ memory use, the mentioned approach
is of course expected to have a much worse time performance than the Prim-based one.

However, now let us assume that a dissimilarity measure $\mathfrak{d}: \mathcal{X}\times\mathcal{X}\to[0,\infty]$
is in fact a pseudometric, i.e., it additionally fulfills the triangle inequality:
for any $\vect{x}, \vect{y}, \vect{z}\in \mathcal{X}$
we have $\mathfrak{d}(\vect{x}, \vect{y}) \leq \mathfrak{d}(\vect{x}, \vect{z}) + \mathfrak{d}(\vect{z}, \vect{y})$
-- such a setting often occurs in practice.

\begin{figure}[htb!]
\centering
\begin{framed}
\begin{enumerate}
   \item[0.] Input: $\vect{x}^{(1)},\dots,\vect{x}^{(n)}$ -- $n$ objects,
$\mathfrak{d}$ -- a dissimilarity measure;
   \item[1.] \textit{pq} = MinPriorityQueue<PQItem>($\emptyset$);

   \hfill{\it /* PQItem structure: (index1, index2, dist);

   \hfill pq returns the element with the smallest dist */}
   \item[2.] \textbf{for} $i=1,2,\dots,n-1$: \hfill{\it /* prefetch phase */}
   \begin{enumerate}
      \item[2.1.] $j =$ \textsf{getNextNearestNeighbor}($i$); \hfill{\it /* depends on $\vect{x}^{(1)},\dots,\vect{x}^{(n)}, \mathfrak{d}$ */}
      \item[2.2.] \textit{pq}.\textsf{push}(PQItem($i$,$j$,$\mathfrak{d}(\vect{x}^{(i)},\vect{x}^{(j)})$)); \hfill{\it /* assert: $i<j$ */}
   \end{enumerate}
   \item[3.] \textit{ds} = DisjointSets($\{1\}, \{2\}, \dots, \{n\}$);
   \item[4.] $m = \emptyset$;
   \item[5.] $i = 1$;
   \item[6.] \textbf{while} $i < n$: \hfill{\it /* merge phase */}
   \begin{enumerate}
      \item[6.1.] $t$ = \textit{pq}.\textsf{pop}(); \hfill{\it /* PQItem with the least dist */ }
      \item[6.2.] $s_1$ = \textit{ds}.\textsf{find\_set}($t$.index1);
      \item[6.3.] $s_2$ = \textit{ds}.\textsf{find\_set}($t$.index2);
      \item[6.4.] \textbf{if} $s_1\neq s_2$:
      \begin{enumerate}
         \item[6.4.1.] $i = i+1$;
         \item[6.4.2.] $m = m\cup\{(t.\text{index1}, t.\text{index2}, t.\text{dist})\}$;
         \item[6.4.3.] \textit{ds}.\textsf{link}($t$.index1, $t$.index2);
         \end{enumerate}
      \item[6.5.] $j =$ \textsf{getNextNearestNeighbor}($t.\text{index1}$);\hfill{\it /* assert: $t.\text{index1}<j$ */}
      \item[6.6.] \textbf{if} $j<\infty$:
      \begin{enumerate}
      \item[6.6.1.] \textit{pq}.\textsf{push}(PQItem($t.\text{index1}$,$j$,$\mathfrak{d}(\vect{x}^{(t.\text{index1})},\vect{x}^{(j)})$));
      \end{enumerate}
   \end{enumerate}
   \item[7.] \textbf{return} $m$;
\end{enumerate}
\end{framed}
\caption{\label{Fig:MSTKruskal} A \textsf{getNextNearestNeighbor}()-based algorithm to determine an MST.}
\end{figure}

In such a case, a significant speed up may be obtained by relying
on some nearest-neighbor search data structure
supporting queries like ``fetch a few nearest-neighbors of the $i$-th object
within the distance in range
$[r_\mathrm{min},r_\mathrm{max})$'', for some $r_\mathrm{min}<r_\mathrm{max}$.
Of course, it is widely known that -- due to the so-called curse of dimensionality,
compare \cite{BeyerEtAll1998:nnmeaningful,ChavezEtAll2001:searchingmetricspaces,%
RadavanovicETAL2010:hubs,Aggraval2001:searchingmetricspaces,Brin1995:nearneighbor}
-- there  is no general-purpose algorithm which always works
better than the na\"{i}ve method in spaces of high dimension.
Nevertheless, in our case, some modifications of a chosen data structure
may lead to improvements in time performance.

Our carefully tuned-up reference implementation (discussed below)
is based on a vantage point (VP)-tree, see \cite{Yianilos1993:vptree}.
The most important modifications applied are as follows.
\begin{itemize}
   \item Each tree node stores an information on the maximal object index
   that can be found in its subtrees. This speeds up the search
   for NNs of objects with higher indices.
   No distance computation are performed for a pair of indices $(i,j)$ unless $i<j$.
   \item Each tree node includes an information whether all its subtrees store
   elements from the same set in the disjoint sets data structure \textit{ds}. This Boolean flag is recursively
   updated during a call to \textsf{getNextNearestNeighbor}().
   Due to that, a significant number of tree nodes during the merge phase
   can be pruned.
   \item An actual tree query returns a batch of nearest-neighbors
   of adaptive size between 20 and 256 (the actual count is determined
   automatically according to how the underlying VP-tree prunes the
   child nodes during the search). The returned set of nearest-neighbors is
   cached in a separate priority queue, one per each input data point.
   Note that the size of the returned batch guarantees asymptotic
   linear total memory use.
\end{itemize}

\subsection{The \texttt{genie} package for R}

A reference implementation of the Genie algorithm has been
included in the \texttt{genie} package for R \cite{Rproject:home}.
This software is distributed under the open source GNU General Public License, version 3.
The package is available
for download at the official CRAN (Comprehensive R Archive Network) repository,
see \url{https://cran.r-project.org/web/packages/genie/},
and hence can be installed
from within an R session via a call to \texttt{install.packages("{}genie"{})}.
All the core algorithms have been developed in the C++11 programming language;
the R interface has been provided by means of the \texttt{Rcpp}
\cite{EddelbuettelFrancois2013:rcppbook} package.
What is more, we decided to rely on the OpenMP API in order to enable
multi-threaded computations.

A data set's clustering can be determined via a call to the \texttt{genie::hclust2()} function.
The \texttt{objects} argument, with which we provide a data set to be clustered,
may be a numeric matrix, a list of integer vectors, or an R character vector.
The dissimilarity measure is selected via the \texttt{metric} argument,
e.g., \texttt{"{}euclidean"{}}, \texttt{"{}manhattan"{}},  \texttt{"{}maximum"{}},
 \texttt{"{}hamming"{}},  \texttt{"{}levenshtein"{}},  \texttt{"{}dinu"{}}, etc.
The \texttt{thresholdGini} argument can be used to define the threshold for
the Gini-index (denoted with $g$ in Figure~\ref{Fig:Algorithm}).
Finally, the \texttt{useVpTree} argument
can be used to switch between the MST algorithms given
in Figures~\ref{Fig:MSTPrim} (the default) and \ref{Fig:MSTKruskal}.
For more details, please refer to the function's manual page (\texttt{?genie::hclust2}).

Here is an exemplary R session in which we
compute the clustering of the \texttt{flame} data set.

\begin{framed}
\begin{verbatim}
# load the `flame` benchmark data set,
# see http://www.gagolewski.com/resources/data/clustering/
data <- as.matrix(read.table(gzfile("flame.data.gz")))
labels <- scan(gzfile("flame.labels.gz"), quiet=TRUE)

# run the Genie algorithm, threshold g=0.2
result <- genie::hclust2(objects=data, metric="euclidean",
   thresholdGini=0.2)

# get the number of reference clusters
k <- length(unique(labels))

# plot the results
plot(data[,1], data[,2], col=labels, pch=cutree(result, k))

# compute the FM-index
as.numeric(dendextend::FM_index(labels, cutree(result, k),
   include_EV=FALSE))
## [1] 1
\end{verbatim}
\end{framed}

\subsection{Number of calls to the dissimilarity measure}

Let us compare the number of calls to the dissimilarity measure
$\mathfrak{d}$ required by different clustering algorithms.
The measures shall be provided relative to $(n^2-n)/2$, which is denoted with ``100\%''.

The benchmark data sets are generated as follows.
For a given $n$, $\sigma$, and $d$, $k=10$ cluster centers
$\boldsymbol\mu^{(1)}, \dots, \boldsymbol\mu^{(k)}$
are picked randomly from the uniform distribution on $[0,10]^d$.
Then, each of the $n$ observations is generated as $\boldsymbol\mu^{(j)}+\vect{y}$,
where $y_l$ for each $l=1,\dots,d$ is a random variate from the normal distribution
with expectation of $0$ and standard deviation of $\sigma$
and $j$ is a random number  distributed uniformly in $\{1,2,\dots,k\}$.
In other words, such a data generation method is more or less equivalent
to the one used in case of the \texttt{g2} data sets in the previous section.
Here, $\mathfrak{d}$ is set to be the Euclidean metric.

\begin{table}[thb!]
\caption{\label{Tab:PercentagesDist} Relative number of pairwise distance
computations (as compared to $(n^2-n)/2$) together with FM-indices (in parentheses).}
\centering
\hspace*{-1.5em}\small
\begin{tabular}{rrrrrrrr}  %
  \hline
$\sigma$ & $d$ & $n$ & gini\_0.3 & gini\_1.0 & complete & Ward & average \\
         &     &     &           & (single)  &          &      &         \\
  \hline
0.50 &   2 & 10000 & $\ast$4.8\% (0.76)  & 100\% (0.38) & 476\% (0.72) & 204\% (0.80) & 484\% (0.78) \\
0.50 &   5 & 10000 & $\ast$22.0\% (1.00) & 100\% (0.86) & 493\% (1.00) & 221\% (1.00) & 496\% (1.00) \\
1.50 &  10 & 10000 & $\ast$30.3\% (0.96) & 100\% (0.32) & 496\% (0.98) & 240\% (0.98) & 499\% (0.91) \\
1.50 &  15 & 10000 & $\ast$58.3\% (1.00) & 100\% (0.42) & 497\% (1.00) & 253\% (1.00) & 498\% (0.98) \\
1.50 &  20 & 10000 & $\ast$84.9\% (1.00) & 100\% (0.69) & 497\% (1.00) & 261\% (1.00) & 498\% (1.00) \\
3.50 & 100 & 10000 & $\ast$101.8\% (1.00) & 100\% (0.32) & 498\% (1.00) & 299\% (1.00) & 499\% (1.00) \\
5.00 & 250 & 10000 & $\ast$100.9\% (0.99) & 100\% (0.32) & 498\% (1.00) & 312\% (1.00) & 499\% (1.00) \\
   \hline
  1.5 &  10 & 100000 & $\ast$14.1\% (0.94)  & 100\% (0.32) & --- & 241\% (0.98) & --- \\
  3.5 & 100 & 100000 & $\ast$104.0\% (0.98) & 100\% (0.32) & --- & $\dagger$ 321\% (0.99) & --- \\
   \hline
\end{tabular}

\hspace*{6em}
\begin{flushleft}\footnotesize
($\ast$) -- a VP-tree used (Fig.~\ref{Fig:MSTKruskal})
to determine an MST; the $(n^2-n)/2$ algorithm (Fig.~\ref{Fig:MSTPrim}) could always be used instead.

($\dagger$) -- only one run of the experiment was conducted.
\end{flushleft}
\end{table}

In each test case (different choices of $n$, $d$, $\sigma$),
we randomly generated ten different data sets and averaged the resulting
FM-indices and relative numbers of calls to $\mathfrak{d}$.
Table~\ref{Tab:PercentagesDist} compares:
\begin{itemize}
   \item the Genie algorithm (\texttt{genie::hclust2()}, package version 1.0.0)
   based on each of the aforementioned MST algorithms (\texttt{useVpTree} equals
   either to \texttt{TRUE} or \texttt{FALSE}); the Gini-index threshold of
   $0.3$ and $1.0$, the latter is equivalent to the single linkage criterion,
   \item the Ward linkage (the \texttt{hclust.vector()} function from the
\texttt{fastcluster} 1.1.16 package \cite{Mullner2013:fastcluster}
-- this implementation works only in the case of the Euclidean distance
but uses $O(n)$ memory),
\item
the complete and average linkage (\texttt{fastcluster::hclust()}
-- they use an NN-chains-based algorithm, and require a pre-computed
distance matrix, therefore utilizing $O(n^2)$ memory).
\end{itemize}

We observe a positive impact of using a metric tree data structure (\texttt{useVpTree=TRUE})
in low-dimensional spaces.
In high-dimensional spaces, it is better to rely on the $(n^2-n)/2$ (Prim-like) algorithm.
Nevertheless, we observe that in high dimensional spaces
the relative number of calls to the dissimilarity measure is 2--5 times smaller than
in the case of other linkages. Please note that an NN-chains-based
version of the Ward linkage (not listed in the table; \texttt{fastcluster::hclust()})
gives similar results as the complete and average ones
and that its Euclidean-distance specific version (\texttt{fastcluster::hclust.vector()})
seems to depend on the data set dimensionality.

\subsection{Exemplary run-times}

Let us inspect exemplary run-time measurements of the \texttt{genie::hclust2()}
function (\texttt{genie} package version 1.0.0).
The measurements were performed on a laptop
with a Quad-core Intel(R) Core(TM) i7-4700HQ @ 2.40GHz CPU and
16 GB RAM. The computer was running Fedora 21 Linux (kernel 4.1.13-100)
and the gcc 4.9.2 compiler was used
(\texttt{-O2 -march=native} optimization flags).

\begin{table}[th!]
\caption{\label{Tab:Timings} Exemplary run-times (in seconds) for different thread numbers, $n=100{,}000$.}

\centering
\begin{tabular}{lllrrr}
\hline
      &                        &     & \multicolumn{3}{c}{Number of threads} \\
 data & MST algorithm & $g$ & 1 & 2 & 4  \\
\hline\hline
 $d=10$, $\sigma=1.5$ & Fig.~\ref{Fig:MSTKruskal} with a VP-tree & $0.3$ & 46.5          &   33.4        &  28.2 \\
                      & Fig.~\ref{Fig:MSTPrim} & $0.3$ & 91.5          &   59.9        &  44.8 \\
                      & Fig.~\ref{Fig:MSTKruskal} with a VP-tree     & $1.0$ & 32.0          &   19.1        &  13.4 \\
                      & Fig.~\ref{Fig:MSTPrim} & $1.0$ & 77.5          &   47.7        &  31.6 \\
               \hline
 $d=100$, $\sigma=3.5$& Fig.~\ref{Fig:MSTKruskal} with a VP-tree & $0.3$ & 1396         &  740         &  456   \\
                      & Fig.~\ref{Fig:MSTPrim} & $0.3$ & 743          &  413         &  293   \\
                      & Fig.~\ref{Fig:MSTKruskal} with a VP-tree & $1.0$ & 1385         &  717         &  454   \\
                      & Fig.~\ref{Fig:MSTPrim} & $1.0$ & 734          &  396         &  281   \\
\hline
\end{tabular}
\end{table}

Table~\ref{Tab:Timings} summarizes the results for $n=100{,}000$
and 1, 2, as well as 4 threads (set up via the \texttt{OMP\_THREAD\_LIMIT}
environmental variable).
The experimental data sets were generated in the same manner as above.
The reported times are minimums of 3 runs.
Note that the results also include the time needed to generate some additional
objects, so that the output is of the same form as the one generated
by the \texttt{stats::hclust()} function in~R.

We note that running 4 threads at a time (on a single multi-core CPU)
gives us a \hbox{2--3-fold} speed-up. Moreover, a VP-tree-based implementation (Figure~\ref{Fig:MSTKruskal}, \texttt{useVpTree=TRUE})
is twice as costly as the other one in spaces of high dimensions. However, in spaces
of low dimension it outperforms the $(n^2-n)/2$ approach (Figure~\ref{Fig:MSTPrim}).
Nevertheless, if a user is unsure whether he/she deals with a high- or
low-dimensional space and $n$ is of moderate order of magnitude, the simple approach
should rather be recommended, as it gives much more predictable timings.
This is why we have decided that the \texttt{useVpTree} argument should default to \texttt{FALSE}.

For a point of reference, let us note that a single test run of the Ward algorithm
(\texttt{fastclu\-ster::hclust.vector()}, single-threaded) for $n=100{,}000$,
$d=10$, $\sigma=1.5$ required 1452.8 seconds
and for $d=100$, $\sigma=3.5$ -- as much as 18433.7 seconds (which is almost 25 times
slower than the Genie approach).

\section{Conclusions}\label{Sec:conclusions}

We have presented a new hierarchical clustering linkage criterion
which is based on the notion of an inequity (poverty) index.
The performed benchmarks indicate that the proposed algorithm -- unless
the underlying cluster structure is drastically unbalanced -- works in
overall better not only than the widely used average and Ward linkage scheme,
but also than the $k$-means and BIRCH algorithms which can be applied on data
in the Euclidean space only.

Our method requires up to $(n^2-n)/2$ distance computations, which is
ca.~2--5 times less than in the case of the other popular linkage
schemes. Its performance is comparable with the single-linkage clustering.
As there is no need to store the full distance
matrix, the algorithm can be used to cluster larger (within one order
of magnitude) data sets than with the Ward and average linkage schemes.

Nevertheless, it seems that we have reached a kind of general limit of an input data set size
for ``classical'' hierarchical clustering, especially in case of multidimensional data.
Due to the curse of dimensionality, we do not have any nearest-neighbor search
data structures that would enable us to cluster data sets of sizes greater
than few millions of observations in a reasonable time span.
What is more, we should keep in mind that the lower bound for run-times
of all the hierarchical clustering methods is $\Omega(n^2)$ anyway.
However, let us stress that for smaller (non-big-data) samples, hierarchical clustering
algorithms are still very useful. This is due to the fact that they do not
require a user to provide the desired number of clusters in advance and that only a measure of objects'
dissimilarity -- fulfilling very mild properties --
must be provided in order to determine a data partition.

Further research on the algorithm shall take into account the effects of, e.g.,
choosing different inequity measures or relying on approximate nearest-neighbors
search algorithms and data dimension reduction techniques
on the clustering quality. Moreover, in a distributed environment, one may
consider partitioning subsets of input data individually and then rely on some
clustering aggregation techniques, compare, e.g., \cite{GionisETAL2007:clustagg}.

Finally, let us note that the Genie algorithm depends on a free parameter,
namely, the inequity index merge threshold, $g$.
The existence of such a tuning parameter is an advantage, as a user
may select its value to suit her/his needs.
In the case of the Gini-index,
we recommend the use of $g\in[0.2,0.5)$, depending on our knowledge of
the underlying cluster distribution. Such a choice led to outstanding results
during benchmark studies. However, we should keep in mind that if the threshold is too low,
the algorithm might have problems with correctly identifying clusters
of smaller sizes in case of unbalanced data.
On the other hand, $g$ cannot be too large, as the algorithm
might start to behave as the single-linkage method, which has a very poor performance.
A possible way to automate the choice of $g$ could consist of a few pre-flight runs
(for different thresholds) on a randomly chosen data sample,
a verification of the obtained preliminary clusterings' qualities,
and a choice of the best coefficient for the final computation.

\section*{Acknowledgments}
We would like to thank the Anonymous Reviewers for the constructive comments
that helped to significantly improve the manuscript's quality.
Moreover, we are indebted to Łukasz Błaszczyk for providing us with
the \texttt{scikit-learn} algorithms performance results.

This study was supported by the National Science Center, Poland,
research project 2014/13/D/HS4/01700.
Anna Cena and Maciej Bartoszuk would like to acknowledge the support by
the European Union from resources of the European Social Fund, Project PO KL
``Information technologies: Research and their interdisciplinary
applications'', agreement UDA-POKL.04.01.01-00-051/10-00
via the Interdisciplinary PhD Studies Program.

\bibliographystyle{model1b-num-names}

\end{document}